\documentclass{article}
\usepackage{spconf,amsmath,graphicx,hyperref}
\usepackage{amsmath,amsfonts,bm,amssymb}
\usepackage{algorithmic}
\usepackage{algorithm}
\usepackage{array}
\usepackage[caption=false,font=normalsize,labelfont=sf,textfont=sf]{subfig}
\usepackage{textcomp}
\usepackage{stfloats}
\usepackage{url}
\usepackage{verbatim}
\usepackage{graphicx}
\usepackage{cite}
\usepackage{cite}
\usepackage{float}
\usepackage{amsmath}
\usepackage{amsthm}
\newtheorem{theorem}{Theorem}[section]

\newtheorem{lemma}[theorem]{Lemma}

\usepackage{xspace}
\newcommand{\ours}{eKSVD\xspace}
\newcommand{\ksvd}{KSVD\xspace}
\newcommand{\nystrom}{Nystr\"om\xspace}

\usepackage{xcolor}

\title{Kernel Singular Value Decomposition with 
\\ 
Extension  to  Multiple Data Sources}
\name{Xinjie Zeng, Qinghua Tao, Johan Suykens
\thanks{This work is jointly supported by the European Research Council under the European Union's Horizon 2020 research and innovation program/ERC Advanced Grant E-DUALITY (787960), iBOF project Tensor Tools for Taming the Curse (3E221427), KU Leuven C1 project Rethinking transformers through duality principles, Flanders AI Research Program, and Leuven.AI Institute.}}
\address{  KU Leuven - ESAT-STADIUS\\
 Kasteelpark Arenberg 10, B-3001 Heverlee - Belgium\\
 Email: firstname.lastname@esat.kuleuven.be, correspondence to Qinghua Tao\\
 }
\begin{document}
\maketitle
\begin{abstract}
Kernel Singular Value Decomposition (\ksvd) 
learns a pair of singular vectors w.r.t.\ an asymmetric kernel matrix, which can be induced by two data sources, e.g., the queries and keys in self-attention or the rows and columns of a given matrix. In this work, we extend \ksvd to multiple data sources, namely \ours,  which conducts joint  nonlinear feature learning  upon asymmetric kernels. In the primal formulation, the  projections associated with each data source  are jointly learned to capture  maximal information, while  incorporating pair-wise couplings. With the Lagrangian  and its Karush–Kuhn–Tucker (KKT) conditions,  the optimization in the dual leads to a generalization of the  shifted eigenvalue problem in Lanczos decomposition theorem of \ksvd. Further, a covariance-based  framework is  derived together with using neural networks (NNs) for explicit feature mappings, complementary to the kernel-based interpretation and optimization.
Numerical experiments verify the effectiveness of  our \ours  compared to methods based on Mercer kernels for tackling multiple data sources, and our innovation  of  deploying  NNs  demonstrates great flexibility for  kernel methods.
\end{abstract}
\begin{keywords}
 SVD, Asymmetric kernels, Lanczos decomposition, Multiple data sources, Unsupervised  learning
\end{keywords}
\section{Introduction}
\label{sec:intro}
Singular Value Decomposition (SVD) factorizes the given matrix through two sets of orthonormal eigenbases, i.e., left and right singular vectors\cite{stewart1993early}.
Its truncated form has been  proven an optimal low-rank matrix approximation in the Frobenius norm\cite{eckart1936}. However, SVD lacks the flexibility to capture nonlinear patterns, which are commonly encountered in practice. In \cite{suykens2016svd},
 a kernelized version of SVD (\ksvd) was proposed to extend SVD to feature spaces, introducing nonlinearity using asymmetric kernels. 

Kernel methods have long been the main workhorse in machine learning, and   commonly rely on  reproducing kernels upon Mercer's condition \cite{aronszajn1950theory},  associated with symmetric positive semi-definite
 kernels, including Support Vector Machines (SVMs) \cite{vapnik1999overview} and also the unsupervised tools of Kernel Principal Component Analysis (KPCA) \cite{scholkopf1998}, Kernel Canonical Correlation Analysis (KCCA) \cite{bach2002kernel},  and Kernel Partial Least Squares (KPLS) \cite{rosipal2001kernel}. In practice, asymmetry properties exist, e.g., directed graphs \cite{ou2016asymmetric}, biclustering \cite{kluger2003spectral}, and self-attention in Transformers \cite{chen2023primal,chen2024svgp}. Such asymmetry in data can hardly be well captured  by simply working with classical Mercer kernels. In contrast, the modeling and optimization in  \ksvd are formulated to accommodate asymmetric kernels 
in feature learning, making it promising to explore richer information  residing in the asymmetry of data.

SVD can be interpreted as jointly learning linear features w.r.t.\ the rows and columns of the given matrix through its left and right  singular vectors, respectively. More generally, \ksvd  extends SVD to nonlinear feature learning and can accommodate generic pairs of data sources, e.g., the queries and keys in self-attention \cite{chen2023primal,chen2024svgp}. In many applications, relevant information exists across multiple data sources, 
which may correspond to different representations of the same samples or to data from different learning tasks. 
In such settings, learning each data source independently is insufficient for satisfactory performance. Within classical kernel-based learning, various methods have been proposed to address joint  learning for varied data sources or tasks \cite{liu2023tensorized}, such as  MCCA~\cite{kettenring1971canonical} and 
MvKPLS~\cite{mvkpls}
which are still limited to Mercer kernels. Such applications motivate extending the joint feature learning perspective of KSVD beyond the two-source setting.

In this work, we formalize \ours, which
extends KSVD to multiple data sources and, meanwhile, retains the flexibility to learn underlying asymmetric relationships. 
In  modeling  with  the primal formulation, the projections w.r.t.\ each data
source are jointly learned to capture  maximal information in nonlinear feature spaces, while incorporating pair-wise couplings across all sources. Using the Lagrangian and the corresponding Karush–Kuhn–Tucker (KKT) conditions, we derive a dual formulation, leading to a generalization of the shifted eigenvalue problem associated with the Lanczos decomposition theorem of KSVD, where the couplings are reflected by different  asymmetric kernels. We further derive the covariance-based framework together with
 using neural networks (NNs) for explicit feature mappings, which complements kernel-based interpretations and
implementations.  Numerical experiments  verify the advantages of our \ours compared to methods built upon Mercer kernels and our parametric alternative with NNs  demonstrates promising potential for flexible modeling in kernel methods.

\section{Background}
\label{sec:background}

\noindent \textbf{SVD and Shifted Eigenvalue Problem.}
SVD  
factorizes a given matrix $ A \in \mathbb R^{N\times M}$ by two sets of  eigenbases, such that $A = U \Lambda  V^\top$.
The Lanczos decomposition theorem \cite{lanczos1958linear} establishes the equivalence between the solutions of SVD and the shifted eigenvalue problem, which is of particular importance to \ksvd and our work.
 
\begin{theorem}[Decomposition Theorem, Lanczos \cite{lanczos1958linear}]\label{theorem:lanczos}
Any non-zero rank $r$ matrix can be written as $ A=\tilde{ U}\tilde{ \Lambda}\tilde{ V}^\top$, where  $\tilde{ U}, \tilde{ \Lambda}, \tilde{ V}$ are defined by the shifted eigenvalue problem:
\begin{equation}\label{eq:theorem:lanczos}
\begin{array}{rl}
    \begin{aligned}
          A \tilde{ V}& =  \tilde{ U}\tilde{ \Lambda} \\
         { A}^\top\tilde{ U}& =  \tilde{ V}\tilde{ \Lambda}
    \end{aligned}
&  \Longleftrightarrow
\begin{bmatrix}
    \bm 0 &  A \\
    A^\top & \bm 0
\end{bmatrix}
\begin{bmatrix}
    \tilde{ U}\\
    \tilde{ V}
\end{bmatrix}
= 
\begin{bmatrix}
      \tilde{ U}\\
    \tilde{ V}
\end{bmatrix}\tilde{\Lambda},
\end{array}
\end{equation}
where  $\tilde{ U}\in \mathbb R^{N\times r}$ and  $\tilde{ V}\in \mathbb R^{M\times r}$ satisfy $\tilde{ U}^\top\tilde{ U}= I_r$ and $\tilde{ V}^\top\tilde{ V}= I_r$, and $\tilde{ \Lambda}\in \mathbb R^{r\times r}$ is diagonal with positive entries.
\end{theorem}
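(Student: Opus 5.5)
The plan is to work through the symmetric augmented matrix
\[
B=\begin{bmatrix}\bm 0 & A\\ A^\top & \bm 0\end{bmatrix}\in\mathbb R^{(N+M)\times(N+M)},
\]
apply the spectral theorem to it, and read off $\tilde U,\tilde\Lambda,\tilde V$ from its eigenvectors with positive eigenvalues. The equivalence displayed in \eqref{eq:theorem:lanczos} is immediate by block multiplication: the product $B[\tilde U;\tilde V]$ equals $[A\tilde V;A^\top\tilde U]$, so the block equation is exactly the pair $A\tilde V=\tilde U\tilde\Lambda$, $A^\top\tilde U=\tilde V\tilde\Lambda$. The substance is therefore (i) the existence of such $\tilde U,\tilde V,\tilde\Lambda$ with the stated orthonormality and positivity, and (ii) the factorization $A=\tilde U\tilde\Lambda\tilde V^\top$.

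\textbf{Spectral structure of $B$.} Since $B$ is real symmetric, it is orthogonally diagonalizable. If $B[u;v]=\lambda[u;v]$, then $B[u;-v]=-\lambda[u;-v]$, so the nonzero spectrum is symmetric about $0$. Next I compute the rank. The kernel of $B$ consists of pairs with $Av=0$ and $A^\top u=0$, so it equals $\ker A^\top\oplus\ker A$ and has dimension $(N-r)+(M-r)$. Hence $\operatorname{rank}B=2r$, and $B$ has exactly $r$ positive eigenvalues $\lambda_1,\dots,\lambda_r$, counted with multiplicity. Take orthonormal eigenvectors $w_i=[u_i;v_i]$ for them, and set $\tilde U=[u_i]$, $\tilde V=[v_i]$, $\tilde\Lambda=\mathrm{diag}(\lambda_i)$. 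By construction $B[\tilde U;\tilde V]=[\tilde U;\tilde V]\tilde\Lambda$.

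\textbf{Normalization; expected main obstacle.} The step I expect to need the most care is showing that $\tilde U^\top\tilde U=I_r$ and $\tilde V^\top\tilde V=I_r$ separately, and not merely that $\tilde U^\top\tilde U+\tilde V^\top\tilde V=I_r$. The argument uses the reflected vectors $w_i'=[u_i;-v_i]$, which are eigenvectors for the negative eigenvalues $-\lambda_i$. Since $\lambda_j\neq-\lambda_i$, symmetry of $B$ gives $w_i'^\top w_j=0$, that is, $u_i^\top u_j-v_i^\top v_j=0$ for all $i,j$. Combined with $u_i^\top u_j+v_i^\top v_j=\delta_{ij}$, this yields $u_i^\top u_j=v_i^\top v_j=\tfrac12\delta_{ij}$. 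Rescaling $\tilde U,\tilde V$ by $\sqrt2$ preserves the eigen-equations, which are linear and homogeneous, and gives the required orthonormality.

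\textbf{Factorization.} From $A\tilde V=\tilde U\tilde\Lambda$ I first get $\tilde U^\top A\tilde V=\tilde\Lambda$. It then suffices to show that $A$ vanishes on $\operatorname{range}(\tilde V)^\perp$, because then $A=A\tilde V\tilde V^\top=\tilde U\tilde\Lambda\tilde V^\top$. Positivity of $\tilde\Lambda$ gives $\tilde V=A^\top\tilde U\tilde\Lambda^{-1}$, so $\operatorname{range}\tilde V\subseteq\operatorname{range}A^\top$. Both spaces have dimension $r$, the first because $\tilde V$ has orthonormal columns, so $\operatorname{range}\tilde V=\operatorname{range}A^\top=(\ker A)^\perp$. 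Therefore $\operatorname{range}(\tilde V)^\perp=\ker A$, on which $A$ vanishes by definition, and this completes the decomposition.
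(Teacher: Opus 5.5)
The paper does not prove this statement. It is quoted as background with a citation to Lanczos, so there is no in-paper argument to compare against. Your proof is correct and self-contained. The block-multiplication equivalence, $\ker B=\ker A^\top\oplus\ker A$ giving $\operatorname{rank}B=2r$, and the orthogonality of $[u_i;-v_i]$ and $[u_j;v_j]$ forcing $u_i^\top u_j=v_i^\top v_j=\tfrac12\delta_{ij}$ all hold. So does the identification $\operatorname{range}\tilde V=\operatorname{range}A^\top=(\ker A)^\perp$. One step deserves an extra clause: the count of exactly $r$ positive eigenvalues. The reflection $[u;v]\mapsto[u;-v]$ is a linear bijection from the $\lambda$-eigenspace onto the $(-\lambda)$-eigenspace. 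That is what makes it match multiplicities and not just the set of eigenvalues, and ``exactly $r$ counted with multiplicity'' needs this.

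The usual textbook route assumes a compact SVD $A=U_r\Sigma_rV_r^\top$ and checks $AV_r=U_r\Sigma_r$ and $A^\top U_r=V_r\Sigma_r$ in two lines. Your argument does not presuppose the SVD. It derives the decomposition from the spectral theorem for the symmetric augmented matrix, so it actually proves existence.

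Your argument also covers the reading the paper relies on when it says the KSVD dual problem ``gives rise to'' the SVD of $K_{xz}$. Your factorization step needs only four facts: the two eigen-equations, $\tilde V^\top\tilde V=I_r$, $r=\operatorname{rank}A$, and invertibility of $\tilde\Lambda$. Hence any such solution of the shifted eigenvalue problem reproduces $A=\tilde U\tilde\Lambda\tilde V^\top$, not only the one you constructed. Your normalization step also explains a practical point. Unit-norm eigenvectors $[\bm\alpha;\bm\beta]$ of that dual problem with positive eigenvalues have blocks of norm $1/\sqrt2$, so they must be rescaled by $\sqrt2$ to become orthonormal singular vectors.
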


\noindent \textbf{Kernel SVD.} 
\ksvd conducts feature learning for two data sources $\mathcal{X} = \{\bm x_i \in \mathbb R^{d_x} \}$ and  $\mathcal{Z} = \{\bm z_j \in \mathbb R^{d_z} \}$ with an asymmetric kernel \cite{suykens2016svd,tao2024learning}. Hence, \ksvd can be regarded as not only an asymmetric generalization to  classical Mercer-kernel based methods, but also the nonlinear extension to the fundamental tool of SVD in linear algebra.
The optimization of \ksvd can be formalized with kernel-based framework under the setups of Least Squares SVMs (LSSVMs) \cite{suykens2016svd,tao2024learning}. In the primal, the optimization is cast as  jointly maximizing the variances w.r.t. two sets of projections with a  coupling term:
\begin{equation}\label{eq:ksvd:primal}
\begin{array}{rl}
    \max\limits_{\bm w, \bm v, e_i, r_j} &   \dfrac{\gamma}{2}(\sum\nolimits_{i=1}^N  e_i^2  + \sum\nolimits_{j=1}^M  r_j^2) -  \bm w^\top \bm v := J_{\rm KSVD}  \\
     {\rm s.t.} &   e_i = \bm w^\top \phi (\bm x_i), \  i=1, \ldots, N,\\
      &  r_j = \bm v^\top \psi (\bm z_j), \  j=1, \ldots, M.
\end{array}
\nonumber
\end{equation}
where $e_i,r_j$ denote the projection scores, $\bm w,\bm v$ denote the primal variables, $\phi,\psi$ denote the feature maps and $\gamma>0$ is the regularization constant. With the Lagrangian and KKT conditions, the optimization
in the dual form is attained as:
\begin{equation}\label{eq:ksvd:dual}
\begin{bmatrix}
    \bm 0 & K_{xz} \\
    K_{zx} & \bm 0
\end{bmatrix}
\begin{bmatrix}
    \bm \alpha\\
    \bm \beta
\end{bmatrix}
=\lambda 
\begin{bmatrix}
    \bm \alpha\\
    \bm \beta
\end{bmatrix},
\end{equation}
where $K_{xz} = K_{zx}^\top \in \mathbb R^{N\times M}$ denotes the kernel matrix allowing asymmetry with the $ij$-th entry $\left < \phi(\bm x_i) ,\psi(\bm z_j) \right >:= \kappa(\bm x_i, \bm z_j)$, $\bm \alpha,\bm \beta$ denote the Lagrangian multipliers, and $\lambda := 1/ \gamma$. 
By Theorem~\ref{theorem:lanczos}, the dual problem~\eqref{eq:ksvd:dual} cast as the shifted eigenvalue problem gives rise to the SVD of $K_{xz}$.

\section{The proposed methodology}
In this section, we extend \ksvd to multiple data sources and derive a generalization to the shifted eigenvalue problem upon asymmetric kernels, together with alternative optimization using covariances and NNs as explicit feature mappings. 

\subsection{Modeling with LSSVMs}
Given $Q\geq 2$ data sources  $\mathcal X_q = \{\bm x_{i_q }^{(q)} \in \mathbb R^{d_q} \}_{i_q=1}^{N_q}, q=1,...,Q$, we formulate the optimization of  \ours in the primal under LSSVM setups by jointly maximizing the projection variances of each  data source and meanwhile regularizing the couplings between each pair across  data sources:
\begin{equation}\label{eq:ours:primal}
\begin{array}{rl}
    \max\limits_{\bm w_q,  e_{i_q}^{(q)}}&  J := (\frac{Q}{2}-1)\sum\nolimits_{q=1}^Q \bm w_q^\top \bm w_q - \sum\nolimits_{\substack{1\leq p < q \leq Q}}\bm w_q ^\top \bm w_p   \\
    & + \frac{Q-1}{2}\gamma\sum\nolimits_{q=1}^Q\sum\nolimits_{i_q=1}^{N_q}  (e_{i_q}^{(q)} )^2  \\
    {\rm s.t.} &   e_{i_q}^{(q)} = \bm w_q^\top \phi_q (\bm x_{i_q}^{(q)}), \quad\forall i_q, \forall q,\\

\end{array}
\end{equation}
where $e_{i_q}^{(q)}$ is the projection score of the $q$-th data source with primal variables $\bm w_q \in \mathbb R^{d_\phi}$ and the feature map $\phi_q:\mathbb R^{d_q} \to \mathbb R^{d_\phi}$,  and $\gamma> 0$ is the regularization constant. 
We remark that formulation \eqref{eq:ours:primal} reduces to \ksvd 
with $Q=2$. Since \eqref{eq:ours:primal} only involves equality constraints, maximizing and minimizing $J$ yield the same first-order stationary points, up to the sign convention for the Lagrange multipliers. The maximization notation reflects the intended variance-maximization interpretation. As the objective is unbounded above, the subsequent analysis concerns stationarity rather than optimality.

\subsection{Optimization with Kernels}\label{sec:dual:generalized:shift}
In the following  Theorem \ref{theorem:generalized:lanzcos}, we prove that the optimization of \ours in the dual form is cast as an 
eigendecomposition on a structured matrix consisting of asymmetric kernel matrices, which reveal couplings across data sources. 
Thus, our derived Eqns. \eqref{eq:ours:primal} and \eqref{eq:ours:dual} together present a comprehensive primal-dual framework for model interpretation and optimization implementation.
\begin{theorem}\label{theorem:generalized:lanzcos}
The Lagrangian and its KKT conditions to \eqref{eq:ours:primal} lead to the following dual eigenvalue problem:
\begin{equation}\label{eq:ours:dual}
   \underbrace{
  \begin{bmatrix}
\bm 0 & K_{12} & \cdots& K_{1Q}  \\
K_{21} & \bm 0 & \ddots& \vdots \\
\vdots & \ddots & \ddots & K_{(Q-1)Q} \\
K_{Q1}& \cdots & K_{Q(Q-1)}& \bm 0\\
\end{bmatrix}}_{\triangleq \mathcal G_Q} 
 \begin{bmatrix}
 \bm  \alpha^{(1)}\\ 
 \bm \vdots \\
 \bm  \alpha^{(Q)} 
 \end{bmatrix}
 = 
 \lambda 
 \begin{bmatrix}
 \bm  \alpha^{(1)}\\ 
 \bm \vdots \\
 \bm  \alpha^{(Q)} 
 \end{bmatrix}
\end{equation}
with 
$\lambda = 1/ \gamma$ and kernel matrices
$K_{pq} \in \mathbb R^{N_p\times N_q}$ which are commonly asymmetric. We denote $K_{pq}:=K_{qp}^\top$ $\forall q,p =1, \ldots, Q, q\neq p$, and thus the matrix $\mathcal G_Q$ is symmetric. 

\end{theorem}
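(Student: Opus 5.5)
The plan is to form the Lagrangian of \eqref{eq:ours:primal}, write out its KKT (stationarity) conditions, eliminate the primal variables $\bm w_q$ and the scores $e_{i_q}^{(q)}$, and read off \eqref{eq:ours:dual}. This mirrors how \eqref{eq:ksvd:dual} is obtained from the KSVD primal. For compactness I write $\Phi_q := [\phi_q(\bm x_1^{(q)}),\ldots,\phi_q(\bm x_{N_q}^{(q)})]\in\mathbb R^{d_\phi\times N_q}$, $\bm e^{(q)}$ for the stacked scores, and $\bm\alpha^{(q)}\in\mathbb R^{N_q}$ for the multipliers of the $q$-th block of constraints. The kernel blocks are then $K_{qp}=\Phi_q^\top\Phi_p$, with entries $\langle\phi_q(\bm x_{i}^{(q)}),\phi_p(\bm x_{j}^{(p)})\rangle$. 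Hence $K_{pq}=K_{qp}^\top$ holds by construction, which gives the symmetry of $\mathcal G_Q$.

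I take the Lagrangian
\[
\mathcal L = J - \sum\nolimits_{q=1}^Q (\bm\alpha^{(q)})^\top\big(\bm e^{(q)} - \Phi_q^\top \bm w_q\big).
\]
The sign convention is immaterial, as remarked after \eqref{eq:ours:primal}; it only flips the sign of the multipliers. The conditions are:
\begin{itemize}
\item[(i)] $\partial\mathcal L/\partial\bm w_q=0$, which gives $(Q-2)\bm w_q-\sum_{p\neq q}\bm w_p+\Phi_q\bm\alpha^{(q)}=0$. Here the coupling term contributes each $\bm w_p$ once, since every unordered pair appears exactly once in $\sum_{p<q}$.
\item[(ii)] $\partial\mathcal L/\partial \bm e^{(q)}=0$, which gives $(Q-1)\gamma\,\bm e^{(q)}=\bm\alpha^{(q)}$.
\item[(iii)] The constraints $\bm e^{(q)}=\Phi_q^\top\bm w_q$.
\end{itemize}

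The key step, and the only non-routine one, is solving the coupled linear system (i) for the $\bm w_q$. I introduce $\bm s:=\sum_p\bm w_p$, so that (i) reads $(Q-1)\bm w_q-\bm s=-\Phi_q\bm\alpha^{(q)}$. Summing over $q$ gives $(Q-1)\bm s-Q\bm s=-\bm s=-\sum_q\Phi_q\bm\alpha^{(q)}$. Hence $\bm s=\sum_p\Phi_p\bm\alpha^{(p)}$ and
\[
\bm w_q=\frac{1}{Q-1}\sum\nolimits_{p\neq q}\Phi_p\bm\alpha^{(p)}.
\]
This is legitimate because the coupling operator $(Q-1)I-\bm 1\bm 1^\top$, acting blockwise, has eigenvalues $Q-1$ and $-1$, so it is invertible for $Q\ge2$. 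This is also exactly why the weights $(\frac Q2-1)$ and $\frac{Q-1}{2}\gamma$ were chosen. For $Q=2$ the expression reduces to $\bm w_1=\Phi_2\bm\alpha^{(2)}$, $\bm w_2=\Phi_1\bm\alpha^{(1)}$, as in KSVD, which serves as a consistency check.

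Finally, I substitute into (iii) and use (ii):
\[
\frac{1}{(Q-1)\gamma}\bm\alpha^{(q)}=\bm e^{(q)}=\frac{1}{Q-1}\sum\nolimits_{p\neq q}\Phi_q^\top\Phi_p\bm\alpha^{(p)}=\frac{1}{Q-1}\sum\nolimits_{p\neq q}K_{qp}\bm\alpha^{(p)}.
\]
The factor $1/(Q-1)$ cancels, leaving $\sum_{p\neq q}K_{qp}\bm\alpha^{(p)}=\lambda\bm\alpha^{(q)}$ with $\lambda=1/\gamma$ for every $q$. Stacking these $Q$ block equations gives precisely $\mathcal G_Q[\bm\alpha^{(1)};\ldots;\bm\alpha^{(Q)}]=\lambda[\bm\alpha^{(1)};\ldots;\bm\alpha^{(Q)}]$ with zero diagonal blocks, i.e.\ \eqref{eq:ours:dual}. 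As in KSVD, $\gamma$ is then interpreted as determined by the eigenvalue rather than fixed a priori.
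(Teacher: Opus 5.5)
Your proposal is correct and follows the same route as the paper: form the Lagrangian, write the KKT conditions, then eliminate $\bm w_q$ and $\bm e^{(q)}$. Your explicit solve of the coupled stationarity system via $\bm s=\sum_p\bm w_p$, giving $\bm w_q=\frac{1}{Q-1}\sum_{p\neq q}\Phi_p\bm\alpha^{(p)}$, just spells out the step the paper compresses into the word ``eliminating''.

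One small correction to your aside: invertibility of $(Q-1)I-\bm 1\bm 1^\top$ does not single out the weight $\frac Q2-1$. What that weight secures is that the inverse, $\frac{1}{Q-1}(I-\bm 1\bm 1^\top)$, has zero diagonal, and that is exactly what makes the diagonal blocks of $\mathcal G_Q$ vanish.
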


\begin{proof}
To formulate the Lagrangian, $Q$ vectors of dual variables, i.e., $\bm \alpha^{(q)} \in \mathbb R^{N_q}$, are introduced: 
\begin{equation}
    \label{eq:lagrange:obj:q}
    \begin{array}{rl}
\mathcal{L} =  J 
 - \sum\nolimits_{q=1}^Q\sum\nolimits_{i_q=1}^{N_q} \alpha_{i_q}^{(q)} ( e_{i_q}^{(q)} - \bm w_q^\top \phi_q(\bm x_{i_q}^{(q)})).
   \end{array}
   \nonumber
\end{equation}
For all $q$ and $ i_q$, the KKT conditions are
\begin{equation}\label{eq:kkt:q}
\left\{
\begin{aligned}
   \dfrac{\partial \mathcal{L}}{\partial \bm w_q}=0 \implies&
{\textstyle\sum\limits_{\substack{p=1\\p\neq q}}^Q}\bm w_p
-(Q-2)\bm w_q \!=
{\textstyle\sum\limits_{i_q=1}^{N_q}}
\alpha_{i_q}^{(q)}\phi_q(\bm x_{i_q}^{(q)})\\
    \dfrac{\partial \mathcal{L}}{\partial e_{i_q}^{(q)}} = 0 \implies & (Q-1){\gamma} e_{i_q}^{(q)} =  \alpha_{i_q}^{(q)},\\
   \dfrac{\partial \mathcal{L}}{\partial \alpha_{i_q}^{(q)}} = 0 \implies & \bm w_q^\top  \phi_q(\bm x_{i_q}^{(q)}) =  e_{i_q}^{(q)}.\\
\end{aligned}
\right.
\end{equation}

Eliminating 
$\bm w_q$ and $e_{i_{q}}^{(q)}$
in the  KKT conditions \eqref{eq:kkt:q} yields
\begin{equation}
 \sum\nolimits_{\substack{p=1,p\neq q}}^Q\sum\nolimits_{j_p=1}^{N_p} \alpha_{j_p}^{(p)} \phi_p(\bm x_{j_p}^{(p)})^\top \phi_q(\bm x_{i_q}^{(q)})   =  \alpha_{i_q}^{(q)} \lambda,
\end{equation}
whose matrix form is given as \eqref{eq:ours:dual}, completing the proof.
\end{proof}

We retain the eigenvectors associated with the largest positive
eigenvalues of $\mathcal G_Q$.
The corresponding primal and dual model representations are:
\begin{equation}
  e_{i_q}^{(q)} = \left \{
    \begin{aligned}
      & \bm w_q^\top \phi_q (\bm x_{i_q}^{(q)})\\
     &   [\frac{1}{Q-1}\sum\nolimits_{j=1, j\neq q}^Q K_{qj}\bm \alpha^{(j)}  ]_{i_q}
    \end{aligned}
      \right.
\end{equation}
The projection score in the dual involves  pair-wise couplings between data sources through the induced asymmetric kernel matrices $K_{qj}$, and it reduces to \ksvd form when only two data sources are addressed.

\subsection{Optimization with Covariances}
Complementary to kernel-based interpretation, Theorem \ref{thm:covariance:optimization} below formulates the optimization upon covariance matrices.  Although the resulting matrix $\mathcal C_Q$ in  \eqref{eq:covariance:optimization:equal} is in general nonsymmetric, its eigenvalues and eigenvectors are real, making it  a practical optimization alternative. 
\begin{theorem}\label{thm:covariance:optimization}
The Lagrangian and KKT conditions to \eqref{eq:ours:primal} lead to the covariance-based optimization as:
\predisplaypenalty=0
\begin{equation}\label{eq:covariance:optimization:equal}
\begin{array}{l}
   \underbrace{
  \begin{bmatrix}
\bm 0 & C_{22} &  C_{33}& \cdots & C_{QQ}  \\
C_{11} & \bm 0 & C_{33}& \cdots  & C_{QQ} \\
C_{11} & C_{22} & \bm 0 & \cdots &  \vdots\\
\vdots & \vdots & \vdots& \ddots  & C_{QQ} \\
C_{11}& C_{22} & C_{33}&  \cdots & \bm 0\\
\end{bmatrix}}_{\mathcal C_Q} 
 \begin{bmatrix}
 \bm  w_1\\ 
 \bm \vdots \\ 
 \bm  w_Q 
 \end{bmatrix}
 = 
\lambda
 \begin{bmatrix}
 \bm  w_1\\ 
 \bm \vdots \\
 \bm  w_Q
 \end{bmatrix}
 \end{array},
\end{equation}
where $C_{qq} \triangleq \sum\nolimits_{i_q=1}^{N_q}\phi_q(\bm x_{i_q}^{(q)})\phi_q(\bm x_{i_q}^{(q)})^\top \in \mathbb R^{d_\phi \times d_\phi}$ is the covariance matrix of the $q$-th data source, $q=1, \ldots, Q$. 

\end{theorem}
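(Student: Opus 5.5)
The plan is to work directly with the KKT conditions \eqref{eq:kkt:q} already obtained in the proof of Theorem~\ref{theorem:generalized:lanzcos}. This time we eliminate the dual variables $\bm\alpha^{(q)}$ and the scores $e_{i_q}^{(q)}$, keeping the primal weights $\bm w_q$.

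\textbf{Step 1: write the stationarity condition in covariances.} From the second and third KKT conditions, $\alpha_{i_q}^{(q)} = (Q-1)\gamma\, \phi_q(\bm x_{i_q}^{(q)})^\top \bm w_q$. Substituting this into the right-hand side of the first condition gives
\begin{equation*}
\sum\nolimits_{i_q}\alpha_{i_q}^{(q)}\phi_q(\bm x_{i_q}^{(q)}) = (Q-1)\gamma\, C_{qq}\bm w_q .
\end{equation*}
Set $\bm u_q := C_{qq}\bm w_q$ and $\bm s := \sum_{p=1}^Q \bm w_p$. The left-hand side of the first condition is $\bm s - \bm w_q - (Q-2)\bm w_q = \bm s-(Q-1)\bm w_q$. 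The $Q$ stationarity conditions therefore read, in block form,
\begin{equation*}
\big((\mathbf 1\mathbf 1^\top - (Q-1) I_Q)\otimes I_{d_\phi}\big)\,[\bm w_1;\ldots;\bm w_Q] = (Q-1)\gamma\,[\bm u_1;\ldots;\bm u_Q].
\end{equation*}

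\textbf{Step 2: invert the coupling matrix.} This is the only nontrivial step. The coupling matrix $M = \mathbf 1\mathbf 1^\top-(Q-1)I_Q$ has eigenvalue $1$ on $\mathbf 1$ and eigenvalue $-(Q-1)$ on $\mathbf 1^\perp$, so it is invertible for every $Q\ge 2$. To find the inverse, we try the ansatz $aI+c\,\mathbf 1\mathbf 1^\top$. Using $(\mathbf 1\mathbf 1^\top)^2 = Q\,\mathbf 1\mathbf 1^\top$ and matching coefficients gives $a=-1/(Q-1)$ and $c=1/(Q-1)$, so
\begin{equation*}
M^{-1}=\tfrac{1}{Q-1}(\mathbf 1\mathbf 1^\top - I_Q).
\end{equation*}
This is exactly $\tfrac{1}{Q-1}$ times the zero-diagonal all-ones pattern. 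As a sanity check, for $Q=2$ we have $M=M^{-1}=\begin{bmatrix}0&1\\1&0\end{bmatrix}$, which recovers the \ksvd coupling.

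\textbf{Step 3: read off the eigenvalue problem.} Applying $M^{-1}$ to both sides yields
\begin{equation*}
\bm w_q = \gamma \sum\nolimits_{p\neq q} C_{pp}\bm w_p, \qquad q=1,\ldots,Q.
\end{equation*}
Multiplying by $\lambda = 1/\gamma$ gives $\sum_{p\neq q} C_{pp}\bm w_p = \lambda \bm w_q$. This is precisely the $q$-th block row of $\mathcal C_Q$ in \eqref{eq:covariance:optimization:equal}: its row $q$ has zero in block $q$ and $C_{pp}$ in every block column $p\neq q$.

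The theorem as stated only asks for the eigenvalue equation, so the remark about real spectra is not needed here. If one wanted it, it follows from $\mathcal C_Q = (\mathbf 1\mathbf 1^\top - I)\otimes I \cdot \mathrm{blkdiag}(C_{qq})$. This is a product of a symmetric matrix and a PSD matrix, which is similar to a symmetric matrix on the range of the PSD factor.
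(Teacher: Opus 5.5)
Your proof is correct and follows the paper's route: eliminate $\bm\alpha^{(q)}$ and $e_{i_q}^{(q)}$ from the KKT conditions to get $\sum_{p\neq q}\bm w_p-(Q-2)\bm w_q=\gamma(Q-1)C_{qq}\bm w_q$, then pass to block form. The one thing you add is the explicit inverse $(\mathbf 1\mathbf 1^\top-(Q-1)I_Q)^{-1}=\tfrac{1}{Q-1}(\mathbf 1\mathbf 1^\top-I_Q)$, which the paper compresses into ``whose matrix form leads to'' and which is needed to reach the standard eigenvalue form with $\mathcal C_Q$.
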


\begin{proof}
Eliminating the dual variables $\bm \alpha^{(q)}$ in  the first set of equations in  KKT conditions \eqref{eq:kkt:q}, for $q=1, \ldots, Q$ yields
  \begin{equation}
 \sum\limits_{\substack{p=1\\p\neq q}}^Q\bm w_p-(Q-2)\bm w_q = \gamma(Q-1)\sum\limits_{i_q=1}^{N_q} [\phi_q(\bm x_{i_q}^{(q)})\phi_q(\bm x_{i_q}^{(q)})^\top]\bm w_q, 
 \nonumber
\end{equation}
whose matrix form leads to \eqref{eq:covariance:optimization:equal}, completing the proof.
\end{proof}
\subsection{Optimization with Neural Networks}
While the formulations in \eqref{eq:ours:dual} and \eqref{eq:covariance:optimization:equal} provide solid mathematical grounding for \ours, these nonparametric implementations can be hindered by scalability as the sizes of $\mathcal G_Q$ and $\mathcal C_Q$ grow. 
We prove the zero-value property of the primal objective  in \eqref{eq:ours:primal}, when  evaluated at a 
stationary solution in \eqref{eq:ours:dual}. This motivates an alternative to efficiently work with  parametric neural networks as explicit feature mappings in an end-to-end manner, together with a task loss, i.e., $\min J^2 + \eta L_{\rm task}$, where $\eta$ is a tunable constant. When multiple components are retained, one could replace $J^2$ with $\sum_k J_k^2$, where $J_k$ denotes the primal objective associated with the $k$-th component.

\begin{lemma}\label{lemma:0}
    Solutions to the generalized shifted eigenvalue problem  \eqref{eq:ours:dual} lead to  $J=0$ in the primal optimization \eqref{eq:ours:primal}.
\end{lemma}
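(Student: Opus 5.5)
The plan is to evaluate $J$ at a KKT point of \eqref{eq:ours:primal}, i.e.\ at any $(\bm w_q, e^{(q)}_{i_q}, \bm\alpha^{(q)})$ satisfying \eqref{eq:kkt:q}. Such points correspond to eigenpairs of \eqref{eq:ours:dual} with $\lambda = 1/\gamma$ by Theorem~\ref{theorem:generalized:lanzcos}. The idea is to use the first KKT equation, taking the inner product with $\bm w_q$ and summing over $q$. This expresses the quadratic coupling part of $J$ in terms of the scores $e^{(q)}_{i_q}$; the remaining steps then show it cancels the variance term exactly.

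\textbf{Step 1: the quadratic part of $J$.} Write the first group of terms of $J$ as
\[
J_w := \Bigl(\tfrac{Q}{2}-1\Bigr)\sum_q \bm w_q^\top\bm w_q - \sum_{p<q}\bm w_q^\top\bm w_p .
\]
Multiply the first KKT condition in \eqref{eq:kkt:q} on the left by $\bm w_q^\top$ and sum over $q=1,\dots,Q$. The left-hand side becomes
\[
\sum_q\sum_{p\neq q}\bm w_q^\top\bm w_p-(Q-2)\sum_q\bm w_q^\top\bm w_q = 2\sum_{p<q}\bm w_q^\top\bm w_p-(Q-2)\sum_q \bm w_q^\top\bm w_q = -2J_w .
\]

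\textbf{Step 2: the right-hand side.} By the third KKT condition, the right-hand side equals $\sum_q\sum_{i_q}\alpha^{(q)}_{i_q}e^{(q)}_{i_q}$. Substituting the second condition, $\alpha^{(q)}_{i_q}=(Q-1)\gamma e^{(q)}_{i_q}$, gives $(Q-1)\gamma\sum_q\sum_{i_q}(e^{(q)}_{i_q})^2$. Hence
\[
J_w = -\tfrac{Q-1}{2}\gamma\sum_q\sum_{i_q}(e^{(q)}_{i_q})^2 .
\]
This is exactly the negative of the variance term in $J$, so $J = 0$.

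\textbf{Main obstacle: sign bookkeeping.} The argument itself is direct; the delicate point is the sign convention in the Lagrangian. With the paper's $\mathcal L = J - \sum \alpha(e-\bm w^\top\phi)$, the stated KKT equation $\sum_{p\ne q}\bm w_p-(Q-2)\bm w_q=\sum\alpha\phi$ fixes the signs. It is precisely this equation, with $\partial J_w/\partial\bm w_q = -\bigl(\sum_{p\neq q}\bm w_p-(Q-2)\bm w_q\bigr)$, that makes $-2J_w$ appear on the left in Step 1.

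\textbf{Alternative check.} The same conclusion follows from Euler's identity for homogeneous functions, which confirms the computation. At a stationary point of $\mathcal L$, the objective $J$ is a homogeneous quadratic in $(\bm w,e)$ while the constraints are linear. Therefore $2J = \langle\nabla J,(\bm w,e)\rangle = \sum\alpha\,(e-\bm w^\top\phi)$, which vanishes on the feasible set.

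Finally, one must note that every dual eigen-solution of \eqref{eq:ours:dual} lifts back to a full KKT point. To do so, define $e^{(q)}=\bm\alpha^{(q)}/((Q-1)\gamma)$ and recover $\bm w_q$ from the first KKT equation, which requires the coupling operator (with blocks $I$ off-diagonal and $-(Q-2)I$ on the diagonal) to be invertible. Its eigenvalues are $1$ and $-(Q-1)$, so it is invertible. With this lift, the computation above applies to every solution of \eqref{eq:ours:dual}.
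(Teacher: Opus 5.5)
Your Steps 1--2 follow the paper's proof exactly: left-multiply the first KKT condition by $\bm w_q^\top$, sum over $q$, and use the second and third conditions to turn the right-hand side into $\gamma(Q-1)\sum_q\sum_{i_q}(e_{i_q}^{(q)})^2$, which rearranges to $2J=0$; the Euler-identity check and the lifting remark are sound additions that the paper leaves implicit. In the lift, you should also confirm that the recovered $\bm w_q = \frac{1}{Q-1}\sum_{p\neq q}\sum_{j_p}\alpha_{j_p}^{(p)}\phi_p(\bm x_{j_p}^{(p)})$ satisfies the third KKT condition, which your Step 2 relies on, since this is exactly where the eigen-equation \eqref{eq:ours:dual} with $\lambda=1/\gamma$ is needed.
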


\begin{proof}
Multiplying the first KKT condition in~\eqref{eq:kkt:q} from the left by $\bm w_q^\top$, summing over $q$, we obtain
\begin{equation*}
2{\textstyle\sum\limits_{p<q}}\bm w_p^\top\bm w_q
-(Q-2){\textstyle\sum\limits_{q=1}^{Q}}\bm w_q^\top\bm w_q
\!=\!\gamma(Q-1)
{\textstyle\sum\limits_{q=1}^{Q}}
{\textstyle\sum\limits_{i_q=1}^{N_q}}
(e_{i_q}^{(q)})^2,
\end{equation*}
which can be rearranged as $2J=0$ by~\eqref{eq:ours:primal}.
\end{proof}

\section{Numerical Experiments}
\noindent \textbf{(1) Experimental setup.} 

\noindent In this section, we use seven 
benchmarks covering multimedia, text, and digit classification for numerical evaluations: Image-caption~\cite{kolenda2002independent} ($n$=1200), 3Sources~\cite{greene2009matrix} ($n$=169), YouTube video~\cite{misc_youtube_multiview_video_games_dataset_269} ($n$=2100), UCI Digit~\cite{brbic2018multi} ($n$=2000), BBC-3~\cite{greene2009matrix} ($n$=1268), Reuters-600~\cite{brbic2018multi} ($n$=600), and NUS-WIDE-OBJ (NUSOBJ)~\cite{Chua2009NUSWIDEAR, Long_2024_CVPR} ($n$=30000),  where $n$ is the number of data points. Here, Reuters-600 and NUSOBJ have five sources and the others contain three. 
Each dataset is split by  60/20/20 as training/validation/test sets.
We compare our \ours with KPCA, MCCA, and MvKPLS for feature learning and then take linear SVMs as downstream classifiers to report performance averaged over five seeds. The source features are concatenated for classification, while MCCA uses its learned canonical components. Hyperparameters such as kernel parameters and the number of components are selected by cross validation. For the larger NUSOBJ dataset, we use the Nystr\"om approximation \cite{tao2024learning} to make the kernel-based methods computationally tractable, and replace  the linear SVM with a single-layer linear NN  for speedups.

For \ours, we conduct three implementations: Dual (using kernels) based on \eqref{eq:ours:dual}, Covariance based on \eqref{eq:covariance:optimization:equal}, and Neural \ours (using parametric NNs) motivated by Lemma \ref{lemma:0}. For Dual and Covariance implementations, we first select  kernels  for each source and construct approximate feature maps by factorizing the corresponding kernel matrices. These 
approximated feature maps are then utilized to compute the asymmetric kernels and covariance matrices in the resulting decompositions, i.e., \eqref{eq:ours:dual} and  \eqref{eq:covariance:optimization:equal}. For Neural \ours, we use $L_{\operatorname{CE}}$, i.e., the cross-entropy loss, for classification tasks. 
Note that 
Neural \ours treats the regularization constant $\gamma$ as a learnable parameter, and we use a shared \(\gamma\) across all components for simplicity in a joint optimization of the NN model with the Adam optimizer.

\noindent \textbf{(2) Results and discussion.}

\noindent \ours and the compared methods generally conduct unsupervised learning and their learned features are evaluated by a downstream supervised classifier. Thanks to our flexibility of deploying parametric NNs,  the  supervision with labels is practical under the kernel-based framework of our \ours, leading to an end-to-end training of jointly learning explicit feature mappings and downstream classifier for  discriminative representations. 
Table~\ref{tab:exp:aligned} shows that under unsupervised setups  \ours outperforms the baseline methods on most datasets. Notably with parametrized feature maps using NNs, our Neural \ours consistently 
outperforms the baseline methods on six datasets, demonstrating our great flexibility together with solid model grounding upon kernel-based formulations. On NUSOBJ, Neural \ours falls slightly behind Dual \ours, which may partly reflect 
mutual effects during the joint  learning between 
the NN-based parametric feature mapping and \ours-related parameters.
We further investigate the training process of Neural \ours in Fig.~\ref{fig:neural-training-dynamics},
which is consistent with the zero-objective property in Lemma~\ref{lemma:0}.
\vspace{-0.2cm}

\begin{table}[ht!]
\centering
\caption{Test macro-F1 (\%), averaged over five splits. The highest mean is in bold; the second-highest is underlined; the ``$^{*}$'' denotes the case using \nystrom approximations.}
\label{tab:exp:aligned}
\fontsize{9}{11}\selectfont
\setlength{\tabcolsep}{2pt}
\renewcommand{\arraystretch}{1.12}
\begin{tabular*}{\columnwidth}{@{\extracolsep{\fill}}lrrrr@{\hspace{0.8em}}|r@{}}
\hline
& \multicolumn{4}{c@{\hspace{0.8em}}|}{Unsupervised} & \multicolumn{1}{c@{}}{Supervised} \\
Dataset & KPCA & MCCA & MvKPLS & \shortstack[r]{Dual\\\ours} & \shortstack[r]{Neural\\\ours} \\
\hline
Image-caption & 98.42 & 91.75 & \underline{98.83} & 98.09 & \textbf{99.08} \\
3Sources & 74.17 & 62.64 & 83.59 & \underline{83.88} & \textbf{91.08} \\
YouTube & 91.80 & 85.64 & 93.20 & \underline{94.00} & \textbf{94.87} \\
UCI Digit & 97.36 & 96.64 & 97.69 & \underline{97.80} & \textbf{98.81} \\
BBC-3 & 93.99 & \underline{94.83} & 93.80 & 94.52 & \textbf{97.04} \\
Reuters-600 & 70.95 & 75.92 & 65.18 & \underline{78.71} & \textbf{80.43} \\
NUSOBJ & \underline{38.16}\rlap{$^{*}$} & 30.46 & 33.06\rlap{$^{*}$} & \textbf{38.75}\rlap{$^{*}$} & 37.90 \\
\hline
\end{tabular*}
\end{table}

\begin{figure}[ht!]
\centering
\includegraphics[width=\columnwidth]{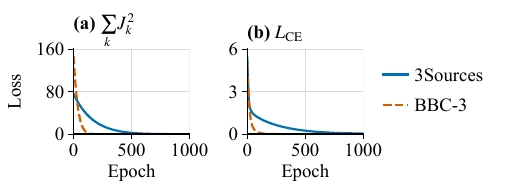}
\par\vspace{-8pt}
\caption{Training loss curves of Neural \ours on 3Sources (blue solid) and BBC-3 (orange dashed): (a) the \ours-related loss $\sum_k J_k^2$; (b) the classification task loss $L_{\operatorname{CE}}$.}
\label{fig:neural-training-dynamics}
\end{figure}

For applications requiring unsupervised  learning setups, we consider Dual and Covariance implementations. With identical feature maps, the implementation with covariances in \eqref{eq:covariance:optimization:equal} reproduces the performances with the dual form  in \eqref{eq:ours:dual}. We next examine their computational cost in Table~\ref{tab:exp:covariance-speed}
with $d_\phi$-dimensional feature maps constructed from random Fourier features for RBF kernels. 
Table~\ref{tab:exp:covariance-speed} shows that Covariance \ours
is faster in both training and inference given $N>d_\phi$. In training, given that each source contains $N$ samples, %
Covariance \ours solves a $Qd_\phi\times Qd_\phi$ eigenvalue problem in \eqref{eq:covariance:optimization:equal} whereas Dual \ours solves a $QN\times QN$ one in \eqref{eq:ours:dual}. At inference, Covariance \ours leverages explicit features, avoiding expensive kernel evaluations with large $N$. On NUSOBJ, Covariance \ours is much faster in training with $N=d_\phi$, as the Nystr\"om-based Dual \ours 
involves SVDs over larger feature matrices, containing all training samples. As for inference, the difference is negligible.

\begin{table}[t]
\centering
\caption{Average training and inference time ($\downarrow$). $N$ is the training sample size and $d_\phi$ the feature dimension. For NUSOBJ, N denotes the subset size for Nystr\"om approximation.}
\label{tab:exp:covariance-speed}
\fontsize{9}{11}\selectfont
\setlength{\tabcolsep}{2pt}
\renewcommand{\arraystretch}{1.12}
\begin{tabular*}{\columnwidth}{@{\extracolsep{\fill}}lrrrrrr@{}}
\hline
& & & \multicolumn{2}{c}{Dual} & \multicolumn{2}{c}{Covariance} \\
Dataset & $N$ & $d_\phi$ & \shortstack[r]{Training\\(s)} & \shortstack[r]{Inference\\(ms)} & \shortstack[r]{Training\\(s)} & \shortstack[r]{Inference\\(ms)} \\
\hline
Image-caption & 720 & 512 & 1.746 & 0.345 & \textbf{0.856} & \textbf{0.164} \\
3Sources & 101 & 512 & \textbf{0.020} & \textbf{0.108} & 0.662 & 0.367 \\
YouTube & 1260 & 512 & 8.667 & 0.731 & \textbf{1.044} & \textbf{0.144} \\
UCI Digit & 1200 & 512 & 6.180 & 0.514 & \textbf{0.917} & \textbf{0.205} \\
BBC-3 & 760 & 1024 & \textbf{1.851} & \textbf{0.740} & 6.076 & 0.817 \\
Reuters-600 & 360 & 1024 & \textbf{2.227} & \textbf{2.503} & 21.871 & 4.678 \\
NUSOBJ & 1024 & 1024 & 67.062 & 0.490 & \textbf{36.204} & \textbf{0.489} \\
\hline
\end{tabular*}
\end{table}

\section{Conclusion}
In this paper, we propose \ours, a multi-source extension of KSVD under the LSSVM framework. 
We derive a generalized shifted eigenvalue problem upon cross-source kernel matrices, together with a complementary covariance-based formulation. 
Thanks to the zero-objective property of the stationary solutions as proved,  
a parametric variant using NNs is established, which is capable of jointly learning explicit feature mappings and \ours projection weights. Experiments on real-world datasets demonstrate competitive performance and provide empirical guidance on solver selection for users. 
Nonetheless, 
the NN-based variant relies heavily on label supervision and hyperparameter tuning. Future work can proceed to explore the designs of self-supervised learning objectives to reduce reliance on labels.

\bibliographystyle{IEEEbib}
\bibliography{myrefs}

\end{document}